\newtheorem{lemma}{Lemma}
\title{Causal Discovery by Kernel Intrinsic \\ Invariance Measure}
\author{%
  Zhitang Chen\quad Shengyu Zhu\quad Yue Liu\quad Tim Tse\\
  % examples of more authors
  % \And
  % Coauthor \\
  Huawei Noah's Ark Lab \\
  \texttt{\{chenzhitang2,zhushengyu,liuyue52,tim.tse\}@huawei.com}
  % Address \\
  % \texttt{email} \\
  % \AND
  % Coauthor \\
  % Affiliation \\
  % Address \\
  % \texttt{email} \\
  % \And
  % Coauthor \\
  % Affiliation \\
  % Address \\
  % \texttt{email} \\
  % \And
  % Coauthor \\
  % Affiliation \\
  % Address \\
  % \texttt{email} \\
}
\begin{document}

\maketitle

\begin{abstract}
Reasoning based on causality, instead of association has been considered as a key ingredient towards real machine intelligence. However, it is a challenging task to infer causal relationship/structure among variables. In recent years, an Independent Mechanism (IM) principle was proposed, stating that the mechanism generating the cause and the one mapping the cause to the effect are independent. As the conjecture, it is argued that in the causal direction, the conditional distributions instantiated at different value of the conditioning variable have less variation than the anti-causal direction. Existing state-of-the-arts simply compare the variance of the RKHS mean embedding norms of these conditional distributions. In this paper, we prove that this norm-based approach sacrifices important information of the original conditional distributions. We propose a Kernel Intrinsic Invariance Measure (KIIM) to capture higher order statistics corresponding to the shapes of the density functions. We show our algorithm can be reduced to an eigen-decomposition task on a kernel matrix measuring intrinsic deviance/invariance. Causal directions can then be inferred by comparing the KIIM scores of two hypothetic directions. Experiments on synthetic and real data are conducted to show the advantages of our methods over existing solutions.
\end{abstract}

\section{Introduction}

Recent breakthrough in deep learning has been significantly advancing Artificial Intelligence (AI). We witness great success of deep learning in many applications such as image classification, image recognition, speech recognition, natural language processing etc. Deep learning methods, or specifically deep neural networks have become the dominant approach for machine learning and AI and thus attracts tremendous amount of attention from both the academia and the industry. However, there are still a number of open challenges remained to tackle for deep learning. To name a few, heavy demand on labeled data, bad generalizability, vulnerable against adversarial attacks and lack of interpretability of deep learning methods are the most notorious ones. Recently, it is advocated in the AI community that causality might be one of the tools to solve the aforementioned open problems. It has been argued that causality, instead of ``superficial association" is invariant cross domain. Machine learning algorithms that learn, and utilize the causal relationship amongst variables provide better generalization performance, robustness against adversarial attacks and better interpretability. Besides the area of machine learning and AI \cite{bengio2019meta,scholkopf2013semi,peters2016causal,lopez2017discovering}, causal discovery also play an important role in economics, sociology, bioinformatic and medical science etc.

However, how to unveil the causal relationship among variables from pure observational (or post-intervention) data is challenging. A bunch of methods have been proposed in the past three decades including Bayesian network \cite{pearl2003causality}, Structural Equation Models (SEM) \cite{shimizu2006linear,hoyer2009nonlinear,heinze2018causal}. However, these methods have their limitations. For example, Bayesian networks via constained-based approach or score-based approach are not able to fully identify the ground-truth graphs but only up to ``Markov equivalent class" \cite{pearl2003causality}. In addition, they are not able to solve the more fundamental problem, i.e., causal discovery for a cause-effect pair.

\begin{wrapfigure}{r}{0.5\textwidth}
%\begin{figure}[h]
\vspace{-6mm}
  \centering
  \includegraphics[width=0.5\textwidth]{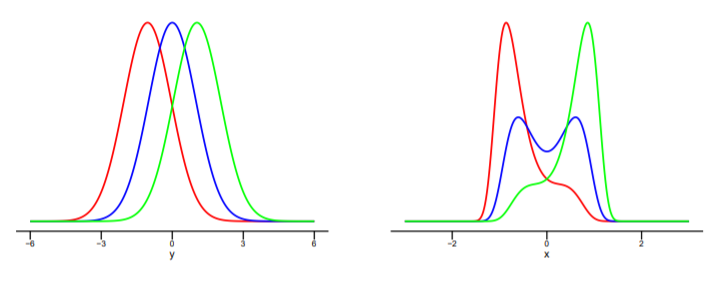}
  \caption{An example of weak discriminative power of RKHS norm \cite{mitrovic2018causal}.}\label{Fig:structural_variation}
%\end{figure}
\end{wrapfigure}

To solve these problems, researchers have been working out new theory and algorithms which try to dig out more regularities from the data distribution \cite{spirtes2016causal}. Amongst them, the Independent Mechanism (IM) principle \cite{janzing2010causal} is considered to be a promising direction. The basic idea behind the IM principle is that nature is parsimonious in the sense that the mechanism generating the cause and the mechanism mapping the cause to the effects are independent, i.e. the probability distribution of the cause $P(X)$ and the conditional distribution mapping the cause to the effect $P(Y|X)$ contain no information of each other. It has been shown that the factorization of the joint distribution according to the causal direction usually yield simpler terms than that in the anti-causal direction \cite{sun2008causal,janzing2010causal}, i.e.
\begin{equation}
\mathcal{K}(P(X)) + \mathcal{K}(P(Y|X))
\leq \mathcal{K}(P(Y)) + \mathcal{K}(P(X|Y)),
\end{equation}
where $\mathcal{K}(\cdot)$ denotes the Kolmogorov complexity which is essentially not computable. Researchers have been proposing computable metrics including RKHS norm \cite{sun2008causal,chen2014causal}, Minimal Description Length \cite{budhathoki2017mdl} etc., to mimic the Kolmogorov complexity in order to derive a practical algorithm for pairwise causal discovery. Our method proposed in this paper falls into this category. According to the IM principle, the conditional distribution $P(Y|X)$ does not depend on $P(X)$ which naturally leads to the conjecture that intrinsic information, e.g. higher order central moments that characterize the shape of $P(Y|X=x)$ does not essentially depend on the value of $x$. In this paper, we prove that existing state-of-the-art norm-based approach along this direction is not sufficient as it sacrifices important information of the original conditional distributions. Instead, we propose a Kernel Intrinsic Invariance Measure (KIIM) to capture the intrinsic invariance of the conditional distribution, i.e. the higher order statistics corresponding to the shapes of the density functions. We show our algorithm can be reduced to an eigen-decomposition task on a kernel matrix measuring intrinsic deviance/invariance.

The rest of the paper is organized as follows: in Sec.\ref{Sec:related_work}, we introduce the basic idea of a recent state-of-the-art method named Kernel Deviance Measure and its limitation; in Sec.\ref{Sec:preliminary}, we give a brief introduction to Reproducing Kernel Hilbert Space (RKHS) embeddings which serve as the tool of our method; in Sec.\ref{Sec:KIIM}, we give a rigorous justification of the limitation of existing methods and then show how our proposed method address those issues; in Sec.\ref{Sec:experiment}, we verify the effectiveness of our proposed method followed by a conclusion in Sec.\ref{Sec:conclusion}.

%However, a recent interview of Judea Pearl \footnote{https://www.theatlantic.com/technology/archive/2018/05/machine-learning-is-stuck-on-asking-why/560675/}, who was the 2011 winner of the ACM Turing Award, the highest distinction in computer science for his ``fundamental contributions to artificial intelligence through the development of a calculus for probabilistic and causal reasoning", has caught the attention of the AI and Machine Learning community on the limitation of deep learning and the necessity of more focus from the whole community on causal reasoning. Pearl, as the pioneer of Machine Learning, became one of its sharpest critics, because as he sees it, \textit{the field of AI got mired in probabilistic association. The state of the art in artificial intelligence today is merely a souped-up version of what machines could already do a generation ago: find hidden regularities in a large set of data.} He said \textit{All the impressive achievements of deep learning amount to just curve fitting}. In order to realize human-level intelligence, Pearl expect that causal reasoning could be the machines.

\section{Related Work}\label{Sec:related_work}

Recently, authors in \cite{mitrovic2018causal} proposed an idea which exploits the variation of the conditional distribution of the hypothetic effect given the hypothetic cause. They argued that the there is less variability in the causal direction than that in the anti-causal direction. An motivating example that is used in \cite{mitrovic2018causal} as follows. Suppose we have two random variables that follow the generating mechanism as $y=x^3+x+\epsilon$, where $\epsilon\sim \mathcal{N}(0,1)$.
As illustrated in Fig. \ref{Fig:structural_variation}, it is obvious that the conditional distribution $p(Y|x)$ instantiated at different value  $x$ are almost identical except for the location; however in the anti-causal direction, the conditional distribution $p(X|y)$ instantiated at different value $y$  have significant structural variation including the number of modes, skewness, kurtosis ect. This piece of structural variation in conditional distributions leads to the so-called   ``cause-effect asymmetry" for causal discovery. The basic idea is to investigate how invariant the conditional distribution (instantiated at different values) is and one prefers the direction with less variation or in other words, more invariance. To achieve it, they proposed the following Kernel Deviance Measure:

\begin{equation}
\mathcal{S}_{\mathbf{x}\rightarrow \mathbf{y}} := {1}/{n} \sum_i ( \Vert \mu_{Y|\mathbf{x}_i} \Vert_{\mathcal{H}_{\mathcal{Y}}} - {1}/{n} \sum_j \Vert  \mu_{Y|\mathbf{x}_j}\Vert_{\mathcal{H}_{\mathcal{Y}}} )^2,
\end{equation}
where $\mathcal{H}$ is a Reproducing Kernel Hilbert Space (RKHS) entailed by a positive definite kernel $k(\cdot, \cdot)$ and $\mu_{Y|\mathbf{x}_i}$ is the kernel mean embedding of the conditional distribution $p(\mathbf{y}|\mathbf{x})$ instantiated at $\mathbf{x}_i$.

The causal discovery rule is straightforward by comparing the scores, i.e. $\mathbf{x} \rightarrow \mathbf{y}$, if $\mathcal{S}_{\mathbf{x}\rightarrow \mathbf{y}} < \mathcal{S}_{\mathbf{y}\rightarrow \mathbf{x}}$, $\mathbf{y} \rightarrow \mathbf{x}$, if $\mathcal{S}_{\mathbf{x}\rightarrow \mathbf{y}} > \mathcal{S}_{\mathbf{y}\rightarrow \mathbf{x}}$, otherwise no conclusion is drawn.
%\begin{equation}
%\left\{
%\begin{array}{lr}
%\mathbf{x} \rightarrow \mathbf{y}, &~\mbox{if} ~ \mathcal{S}_{\mathbf{x}\rightarrow \mathbf{y}} < \mathcal{S}_{\mathbf{y}\rightarrow \mathbf{x}}\\
%\mathbf{y} \rightarrow \mathbf{x}, &~\mbox{if} ~ \mathcal{S}_{\mathbf{x}\rightarrow\mathbf{y}} > \mathcal{S}_{\mathbf{y}\rightarrow \mathbf{x}}\\
%\mbox{no conclusion},&\mbox{otherwise}
%\end{array}\right.
%\end{equation}.

Positive results were reported in \cite{mitrovic2018causal} which suggests that causal discovery via invariance is a promising direction. However, we notice that the above method has significant limitation that should be addressed. Before we introducing our method, we give some preliminary knowledge on RKHS embeddings.
\section{Preliminary on Reproducing Kernel Hilbert Space Embeddings}\label{Sec:preliminary}

Kernel methods \cite{scholkopf2001learning} are a class of machine learning algorithms that map the data from the original space implicitly to a high dimensional or even infinite dimensional feature space $\mathcal{H}$. One can get rid of computing the coordinates of the data in that space explicitly if the algorithm can reduce to inner products of  feature vectors of all data points which can be easily calculated as the kernel function of any two data points. This is called the kernel trick \cite{scholkopf2001learning}. The kernel function essentially act as a similarity function between a pair of data points and thus kernel methods are categorized as a typical method of instant-based learning.
\[
\mathbf{x} \mapsto \phi(\mathbf{x}):=k(\cdot,\mathbf{x}),
\]
where $\langle \phi(\mathbf{x}),\phi(\mathbf{x}')\rangle = \langle k(\cdot,\mathbf{x}), k(\cdot, \mathbf{x}')\rangle= k(\mathbf{x},\mathbf{x}')$ and $k(\mathbf{x},\mathbf{x}')$ is a positive definite kernel.
The kernel mean embedding \cite{smola2007hilbert} of a probability density $p(\mathbf{x})$ is defined as:
\begin{equation}
\mu_X = \int \phi(\mathbf{x}) p(\mathbf{x})d\mathbf{x},
\end{equation}
One can simply interpret the kernel mean embedding as the vector of (higher order) moments. This interpretation is exactly true if one uses a polynomial kernel $k(\mathbf{x},\mathbf{x}')=(\mathbf{x}^T\mathbf{x}'+1)^d$, where $d>0$. It has been shown that if the kernel is characteristic \cite{song2013kernel}, e.g. a Gaussian kernel, then the mapping a probability distribution to its kernel mean embedding is injective, i.e. we lose no information during the mapping. The conditional embeddings of the conditional distribution $p(Y|X)$ is a sweep out a family of points in the RKHS \cite{song2013kernel}, each one of which is essentially the kernel mean embedding of the conditional distribution $p(Y|\mathbf{x})$ indexed by a fixed value of the conditioning variable $\mathbf{x}$. It is shown in \cite{song2013kernel} that under a mild assumption that $\mathbb{E}_{Y|\cdot}[g(Y)]\in \mathcal{H}_{\mathcal{X}}$, the conditional mean embedding can be obtained by Eq.\ref{Eq:conditional_embedding}:
\begin{equation}\label{Eq:conditional_embedding}
\mu_{Y|\mathbf{x}} = \mathcal{C}_{YX}\mathcal{C}_{XX}^{-1}\phi(\mathbf{x}),
\end{equation}
where $\mathcal{C}_{YX}:=\int \phi(\mathbf{y})\otimes \phi(\mathbf{x}) p(\mathbf{x},\mathbf{y})d\mathbf{x}d\mathbf{y}$ and $\mathcal{C}_{XX}:=\int \phi(\mathbf{x})\otimes \phi(\mathbf{x}) p(\mathbf{x})d\mathbf{x}$.
The empirical estimation of the kernel mean embedding and the conditional mean embedding given a set of observation $X=[\mathbf{x}_1,\mathbf{x}_2,\cdots,\mathbf{x}_n]$ and $Y=[\mathbf{y}_1,\mathbf{y}_2,\cdots,\mathbf{y}_n]$:
\begin{equation}\label{Eq:empirical_estimation}
\begin{split}
&\hat{\mu}_X = \frac{1}{n}\bm{\Phi}\mathbf{1},\\
&\hat{\mu}_{Y|\mathbf{x}} = \bm{\Psi}(\mathbf{H}\mathbf{K}_x + \lambda n \mathbf{I})^{-1}\mathbf{k}_{\mathbf{x}},
\end{split}
\end{equation}
where $\bm{\Phi}=[\phi(\mathbf{x}_1),\phi(\mathbf{x}_2),\cdots,\phi(\mathbf{x}_n)]$, $\bm{\Psi}=[\phi(\mathbf{y}_1),\phi(\mathbf{y}_2),\cdots,\phi(\mathbf{y}_n)]$, $\mathbf{K}_x$ is the kernel Gram matrix of $\mathbf{x}$, i.e. $[\mathbf{K}_x]_{ij}=k(\mathbf{x}_i, \mathbf{x}_j)$ and $\mathbf{H}=\mathbf{I}-\frac{1}{n}\mathbf{1}\mathbf{1}^T$ with $\mathbf{I}$ as an identity matrix and $\mathbf{1}$ is a vector of $1s$ of appropriate dimension.
The (conditional) kernel mean embeddings provide compact and nonparametric representation of the (conditional) distribution. Manipulations of the probability distribution such as complicated operations on probability distribution in Bayesian inference can easily reduces to matrix manipulation in the RKHS. For example, Maximum Mean Discrepancy (MMD)\cite{gretton2008kernel} was proposed for two sample test. A Kernel Bayes Rule (KBR) \cite{fukumizu2013kernel} was proposed to conduct Bayesian inference in the RKHS. Given that the RKHS embedding has solid theoretical support and is easy to use, it is adopted in our paper to measure the intrinsic invariance of the conditional distributions.

%\begin{definition}\label{Def:conditionalembedding}
%The operator $\mathcal{U}_{Y|X}$ is defined as: $\mathcal{U}_{Y|X}:= \mathcal{C}_{YX}\mathcal{C}_{XX}^{-1}$.
%\end{definition}
%
%We can now show that under our definition, $\mathcal{U}_{Y|X}$ satisfies the properites that we wanted.
%\begin{theorem}\label{Theorem:satisfyproperties}
%Assuming that $\mathbf{E}_{Y|X}[g(Y)|X]\in\mathcal{H}$, the embedding of conditional distributions in Definition \ref{Def:conditionalembedding} satisfies the properties conditions.
%\end{theorem}

\section{Causal Discovery by Intrinsic Invariance}\label{Sec:KIIM}
%In this section, we investigate causal discovery problems for a cause-effect pair $\mathbf{x}$ and $\mathbf{y}$.
%
%Suppose the ground truth causal direction is $\mathbf{x}\rightarrow \mathbf{y}$ and we aim at identifying the causal direction based on
%pure observational data $\mathbf{X}=[\mathbf{x}_1,\mathbf{x}_2, \cdots, \mathbf{x}_n]$ and $\mathbf{Y}=[\mathbf{y}_1, \mathbf{y}_2, \cdots, \mathbf{y}_n]$.
%
%Causal discovery from pure observational data is very challenging.  Researchers have proposed a number of methods to tackle this task. In recent years, the Independent Mechanism principle (IM) has attracted increasing amount of attention from the machine learning and statistics community.
%
%The basic idea is that we believe nature is parsimonous in the sense that the generating mechanism of the cause and the mechanism mapping the cause to the effect are independent.  Based on this independence assumption and some other extra working assumption, it is possible to derive the so called cause-effect asymmetry.

Although positive results were reported on synthetic dataset and some real data in \cite{mitrovic2018causal}, there are some potential problems regarding the discriminative power of the RKHS norm-based method which is essentially calculating the variance of the conditional mean embedding norms. Back to the motivating example, we notice that in the anti-causal direction, the conditional density in red and the one in green are symmetric with respect to the y-axis and the structural variation between the red and the green one is significant. However, the norms of the RKHS mean embeddings of these two conditional distributions would be equal which leads to some issues of the discriminative power of the direct norm-based method \cite{mitrovic2018causal},i.e. a direct norm-based approach might lose the discriminative power to distinguish two distributions with significant structural variability. We give a formal justification of this conjecture in the next section.

\subsection{Discriminative Power Issues of the RKHS-norm-variance approach}

The major limitation of the direct norm-based approach is that the mapping of a probability distribution to the norm of its RHKS mean embedding is not injective, i.e., there might two distinct probability distributions sharing the same RKHS mean embeddingnorm  . Consequently, a deviance measure that simply calculate the variance of the RKHS such norms might not be discriminative enough for causal discovery. In the following lemma, we show that the norm of the kernel mean embeddings $\Vert \mu_{p}\Vert_{\mathcal{H}_{\mathcal{X}}}$ and  $\Vert \mu_{q}\Vert_{\mathcal{H}_{\mathcal{X}}}$ which correspond to the probability density function $p(\mathbf{x})$ and $q(\mathbf{x})=p(-\mathbf{x})$ are equal if a stationary (translation invariant) kernel is used.

\begin{lemma}\label{lemma:equal_norm}
Denote the domain of $\mathbf{x}$ as  $\mathcal{X}$, and if $\mathcal{X}$ is symmetric with respect to the origin, given two probability densities $p(\mathbf{x})$ and $q(\mathbf{x})$ where $q(\mathbf{x})=p(-\mathbf{x})$, we attain
\begin{equation}
\Vert \mu_{p} \Vert_{\mathcal{H}_{\mathcal{X}}} = \Vert \mu_{q} \Vert_{\mathcal{H}_{\mathcal{X}}},
\end{equation}
where $\mu$ is the kernel mean embedding with respect to a stationary kernel $k(\mathbf{x},\mathbf{x}')=k(\mathbf{x}-\mathbf{x}')$.
\end{lemma}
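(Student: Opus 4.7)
The plan is to compute $\|\mu_q\|_{\mathcal{H}_\mathcal{X}}^2$ explicitly as a double integral and then use a change of variables $\mathbf{x}\mapsto -\mathbf{x}$ together with the translation invariance of the kernel to reduce it to $\|\mu_p\|_{\mathcal{H}_\mathcal{X}}^2$. Nothing more delicate is needed.

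First I would expand the squared RKHS norm in the usual way: by the reproducing property and the definition of the mean embedding,
\begin{equation*}
\|\mu_q\|_{\mathcal{H}_\mathcal{X}}^2 \;=\; \langle \mu_q,\mu_q\rangle_{\mathcal{H}_\mathcal{X}} \;=\; \iint k(\mathbf{x},\mathbf{x}')\,q(\mathbf{x})q(\mathbf{x}')\,d\mathbf{x}\,d\mathbf{x}',
\end{equation*}
and substitute $q(\mathbf{x})=p(-\mathbf{x})$. Next I would apply the change of variables $\mathbf{u}=-\mathbf{x}$, $\mathbf{u}'=-\mathbf{x}'$; the Jacobian has absolute value one, and because $\mathcal{X}$ is assumed symmetric about the origin the domain of integration is unchanged. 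This yields
\begin{equation*}
\|\mu_q\|_{\mathcal{H}_\mathcal{X}}^2 \;=\; \iint k(-\mathbf{u},-\mathbf{u}')\,p(\mathbf{u})p(\mathbf{u}')\,d\mathbf{u}\,d\mathbf{u}'.
\end{equation*}

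Now I invoke the stationarity hypothesis $k(\mathbf{x},\mathbf{x}')=k(\mathbf{x}-\mathbf{x}')$, which gives $k(-\mathbf{u},-\mathbf{u}')=k(\mathbf{u}'-\mathbf{u})$. The final ingredient is that any real-valued positive definite stationary kernel is automatically even, i.e.\ $k(\mathbf{z})=k(-\mathbf{z})$; this follows either from symmetry of $k$ as a kernel ($k(\mathbf{x},\mathbf{x}')=k(\mathbf{x}',\mathbf{x})$) or, if one prefers, from Bochner's theorem since the spectral measure of a real kernel is symmetric. Hence $k(\mathbf{u}'-\mathbf{u})=k(\mathbf{u}-\mathbf{u}')=k(\mathbf{u},\mathbf{u}')$, and the right-hand side collapses to $\iint k(\mathbf{u},\mathbf{u}')p(\mathbf{u})p(\mathbf{u}')\,d\mathbf{u}\,d\mathbf{u}' = \|\mu_p\|_{\mathcal{H}_\mathcal{X}}^2$, yielding the claim after taking square roots.

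There is no real obstacle here; the only point that requires a moment's care is the evenness of $k$ as a function of its single argument, which is why the symmetry of the domain $\mathcal{X}$ is needed so that the substitution $\mathbf{u}=-\mathbf{x}$ is a measure-preserving bijection of $\mathcal{X}$ onto itself. Once that is noted, the whole argument is a one-line change of variables.
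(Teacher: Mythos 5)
Your proof is correct, and it takes a different (and in fact more self-contained) route than the paper. The paper invokes Bochner's theorem to write the feature map of a stationary kernel as a list of Fourier features $[\cos(\bm{\omega}_i^T\mathbf{x}),\sin(\bm{\omega}_i^T\mathbf{x})]_{i=1}^{N_\mathcal{H}}$, computes $\mu_p$ and $\mu_q$ componentwise, and observes that replacing $p(\mathbf{x})$ by $p(-\mathbf{x})$ leaves the cosine coordinates $\rho_i$ unchanged while flipping the sign of the sine coordinates $\varsigma_i$, so the sum of squares $\sum_i(\rho_i^2+\varsigma_i^2)$ is invariant. You instead expand $\|\mu_q\|^2_{\mathcal{H}_\mathcal{X}}$ as the double integral $\iint k(\mathbf{x},\mathbf{x}')q(\mathbf{x})q(\mathbf{x}')\,d\mathbf{x}\,d\mathbf{x}'$, change variables, and use the evenness $k(\mathbf{z})=k(-\mathbf{z})$ of a real stationary positive definite kernel. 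The two arguments exploit the same underlying symmetry, but yours avoids the paper's implicit assumption that the Bochner representation is a \emph{finite} sum of Fourier features (the theorem actually gives an integral against a spectral measure; the finite list is at best a random-features approximation), and it needs only the elementary symmetry $k(\mathbf{x},\mathbf{x}')=k(\mathbf{x}',\mathbf{x})$ rather than any spectral machinery. The paper's version, on the other hand, makes the mechanism more vivid --- it shows exactly \emph{which} coordinates of the embedding are lost when one passes to the norm (the signs of the sine/odd components), which is the intuition the authors lean on when arguing that norm-based deviance measures discard shape information. Your one sentence of care about the symmetric domain making $\mathbf{u}=-\mathbf{x}$ a measure-preserving bijection of $\mathcal{X}$ is exactly the role that hypothesis plays, and is a point the paper glosses over.
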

\begin{proof}
According to Bochner's theorem \cite{rudin1962fourier}, for a stationary kernel $k(\mathbf{x}-\mathbf{x}')$, we have:
\[
\phi(\mathbf{x}) = [cos(\bm{\omega}_1^T\mathbf{x}), sin(\bm{\omega}_1^T\mathbf{x}),\cdots,cos(\bm{\omega}_{N_{\mathcal{H}}}^T\mathbf{x}), sin(\bm{\omega}_{N_{\mathcal{H}}}^T\mathbf{x})]^T,
\]
where $N_{\mathcal{H}}$ is the dimension of the feature space. We attain $\mu_p = [\rho_1,\varsigma_1,\rho_2,\varsigma_2\cdots,\rho_{N_{\mathcal{H}}},\varsigma_{N_{\mathcal{H}}}]^T$, where $\rho_i = \int cos(\bm{\omega}^T_i \mathbf{x}) p(\mathbf{x})d\mathbf{x}$ and $\varsigma_i = \int sin(\bm{\omega}^T_i \mathbf{x})p(\mathbf{x})d\mathbf{x}$ and thus $\Vert \mu_p \Vert_{\mathcal{H}}=\sum_{i=1}^{N_{\mathcal{H}}} (\rho_i^2 + \varsigma_i^2)$.

Similarly, we have $\mu_q = [\rho_1, -\varsigma_1, \cdots, \rho_{N_{\mathcal{H}}}, -\varsigma_{N_{\mathcal{H}}}]^T$ and thus $\Vert \mu_q \Vert_{\mathcal{H}}=\sum_{i=1}^{N_{\mathcal{H}}} (\rho_i^2 + \varsigma_i^2)$.
Consequently, we show that $\Vert \mu_p \Vert_{\mathcal{H}_{\mathcal{X}}}=\Vert \mu_q \Vert_{\mathcal{H}_{\mathcal{X}}}$.
\end{proof}

According to Lemma ~\ref{lemma:equal_norm}, we see that even thought two probability densities are very different, e.g. for skewed distribution, $p(\mathbf{x})$ and $q(\mathbf{x})$ are different, but they share the same norm.

Similar conclusion can be drawn for more general cases and is justified in Lemma~\ref{lemma:non_injective}.
\begin{wrapfigure}{r}{0.5\textwidth}
%\begin{figure}[h]
  \centering
  \includegraphics[width=0.5\textwidth]{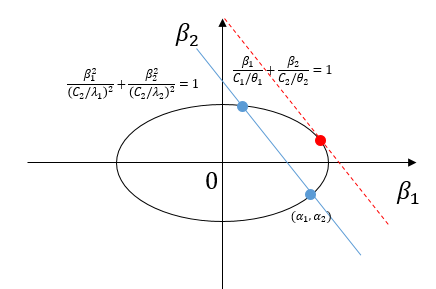}
  \caption{A geometric interpretation}\label{Fig:ellipse}
\vspace{-6mm}
%\end{figure}
\end{wrapfigure}

\begin{lemma}\label{lemma:non_injective}
Given an arbitrary probability density $p(\mathbf{x})\in \mathcal{H}_{\mathcal{X}}$, where $\mathcal{H}_{\mathcal{X}}$ is a Reproducing Kernel Hilbert Space (RKHS) entailed by a positive definite kernel $k(\mathbf{x},\mathbf{x}')$, then with high probability there exists at least one probability density $q(\mathbf{x})\in \mathcal{H}_{\mathcal{X}}$ and $q(\mathbf{x})\neq p(\mathbf{x})$ such that
\[
\Vert \mu_{p} \Vert_{\mathcal{H}_{\mathcal{X}}} = \Vert \mu_q \Vert_{\mathcal{H}_{\mathcal{X}}},
\]
where $\mu_p$ and $\mu_q$ are the kernel mean embeddings of $p(\mathbf{x})$ and $q(\mathbf{x})$ respectively.
\end{lemma}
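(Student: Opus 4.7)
The plan is to exhibit a continuous family of distinct densities near $p$ all sharing the same mean-embedding norm, which immediately produces infinitely many candidates for $q$. The intuition is dimensional: $\|\mu_p\|_{\mathcal{H}_{\mathcal{X}}}$ is a single real-valued functional of $p$, whereas admissible perturbations of $p$ live in an infinite-dimensional space, so there is ample room for norm-preserving deviations.

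First I would pick two functions $h_1,h_2\in\mathcal{H}_{\mathcal{X}}$ with $\int h_i(\mathbf{x})\,d\mathbf{x}=0$, chosen so that their signed embeddings $\mu_{h_i}:=\int\phi(\mathbf{x})h_i(\mathbf{x})\,d\mathbf{x}$ are linearly independent in $\mathcal{H}_{\mathcal{X}}$ and that $(a_1,a_2):=(\langle\mu_p,\mu_{h_1}\rangle,\langle\mu_p,\mu_{h_2}\rangle)\neq(0,0)$. These are generic open conditions on $(h_1,h_2)$, which is exactly where the ``with high probability'' phrasing enters: under any non-degenerate random model for the perturbations, the conditions hold almost surely. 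Then consider the two-parameter family $q_{\alpha,\beta}:=p+\alpha h_1+\beta h_2$, which integrates to $1$ automatically and, for $(\alpha,\beta)$ in a neighbourhood of the origin, remains pointwise nonnegative and hence a valid density in $\mathcal{H}_{\mathcal{X}}$.

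Next, by linearity of the mean embedding, $\mu_{q_{\alpha,\beta}}=\mu_p+\alpha\mu_{h_1}+\beta\mu_{h_2}$, and
\[
F(\alpha,\beta)\;:=\;\|\mu_{q_{\alpha,\beta}}\|_{\mathcal{H}_{\mathcal{X}}}^{2}-\|\mu_p\|_{\mathcal{H}_{\mathcal{X}}}^{2}\;=\;2(a_1\alpha+a_2\beta)+Q(\alpha,\beta),
\]
where $Q(\alpha,\beta)=\|\alpha\mu_{h_1}+\beta\mu_{h_2}\|_{\mathcal{H}_{\mathcal{X}}}^{2}$ is a positive definite quadratic form by the linear independence of $\mu_{h_1},\mu_{h_2}$. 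The level set $\{F=0\}$ is a conic through the origin; parametrizing $(\alpha,\beta)=t(\cos\theta,\sin\theta)$, the nontrivial root is $t^{\ast}(\theta)=-2(a_1\cos\theta+a_2\sin\theta)/Q(\cos\theta,\sin\theta)$, a continuous function of $\theta$ that vanishes whenever $a_1\cos\theta+a_2\sin\theta=0$. Choosing $\theta$ near such a zero yields nontrivial solutions $(\alpha,\beta)\neq(0,0)$ of arbitrarily small norm, and hence densities $q:=q_{\alpha,\beta}\neq p$ with $\|\mu_q\|_{\mathcal{H}_{\mathcal{X}}}=\|\mu_p\|_{\mathcal{H}_{\mathcal{X}}}$.

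The main obstacle I expect is the pointwise positivity constraint $q\geq 0$, which could fail for large perturbations. The construction sidesteps this precisely because the solution locus is a full curve through the origin rather than an isolated point: the norm-preserving $(\alpha,\beta)$ can be chosen as small as one wishes, and positivity is preserved whenever $|\alpha|\,\|h_1\|_{\infty}+|\beta|\,\|h_2\|_{\infty}$ is smaller than the relevant lower bound on $p$ on the support of $h_1,h_2$. A secondary subtlety is formalising ``with high probability''; I would do so by fixing a concrete random model (e.g.\ zero-mean bounded perturbations supported on $\{p>\delta\}$, drawn with a density absolutely continuous with respect to Lebesgue measure) and noting that the two genericity conditions above cut out a zero-measure set, so they are satisfied almost surely.
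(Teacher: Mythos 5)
Your argument is correct at the same level of rigor as the paper's, but it takes a genuinely different route. The paper expands $p$ in the Mercer eigenbasis, writes $p(\mathbf{x})=\frac{1}{\bm{\alpha}^T\bm{\theta}}\phi(\mathbf{x})^T\bm{\alpha}$, freezes all coefficients except the first two, and solves the simultaneous system ``normalization constant preserved'' (a line) and ``embedding norm preserved'' (an ellipse), concluding via a trigonometric parametrization that the line generically meets the ellipse in two distinct points. You instead perturb $p$ directly by $\alpha h_1+\beta h_2$ with $\int h_i=0$, so normalization is automatic, and observe that the norm-preserving locus $\{2(a_1\alpha+a_2\beta)+Q(\alpha,\beta)=0\}$ is an ellipse through the origin, every nonzero point of which yields a valid $q$. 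The underlying geometry (a two-parameter family cut by a linear and a quadratic condition) is the same, but your formulation buys three things the paper's does not: the normalization constraint disappears rather than being one of the two equations; you get a continuum of norm-preserving densities, not just one extra intersection point; and, by taking $(\alpha,\beta)$ arbitrarily small near a zero of $a_1\cos\theta+a_2\sin\theta$, you can address pointwise nonnegativity of $q$, which the paper's proof silently ignores (nothing there guarantees $\frac{1}{\bm{\beta}^T\bm{\theta}}\phi(\mathbf{x})^T\bm{\beta}\geq 0$). The one caveat specific to your route is the existence of suitable $h_1,h_2\in\mathcal{H}_{\mathcal{X}}$: for kernels whose RKHS consists of analytic functions (e.g.\ Gaussian), no nonzero element is compactly supported, so the bound $|\alpha|\,\|h_1\|_\infty+|\beta|\,\|h_2\|_\infty<\inf p$ on ``the support of $h_1,h_2$'' must be replaced by a pointwise domination condition such as $|\alpha h_1+\beta h_2|\leq p$ on all of $\mathcal{X}$; this is a repairable technicality, and both your genericity conditions and the paper's ``non-tangency'' condition play the same role in excusing the degenerate cases behind the lemma's ``with high probability'' phrasing.
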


\begin{proof}
Given a positive definite kernel $k(\mathbf{x},\mathbf{x}')$, according to Mercer's Theorem \cite{scholkopf2001learning}, we have $\mathbf{x}\mapsto \phi(\mathbf{x})$, where $\phi(\mathbf{x})=[\sqrt{\lambda_1}\phi_1(\mathbf{x}),\sqrt{\lambda}_2\phi_2(\mathbf{x}),\cdots,\sqrt{\lambda_{N_{\mathcal{H}}}}\phi_{N_{\mathcal{H}}}(\mathbf{x})]$
,where $\lambda_i>0$ and $\int \phi_i(\mathbf{x}) \phi_j(\mathbf{x})d\mathbf{x}=\delta_{ij}$. We further assume $\phi_i(\mathbf{x})$ are integrable, $\forall i$, then we write $\theta_i = \int \sqrt{\lambda_i}\phi_i(\mathbf{x})d\mathbf{x}$ and thus $\bm{\theta}=\int \phi(\mathbf{x})d\mathbf{x}$. For an arbitrary probability density $p(\mathbf{x})\in \mathcal{H}_{\mathcal{X}}$, we can represent it as:
\[
p(\mathbf{x}) = {1}/{\bm{\alpha}^T\bm{\theta}} \phi(\mathbf{x})^T\bm{\alpha},
\]
where $\bm{\alpha}$ is a vector of coefficient. By definition, the RKHS mean embedding of $p(\mathbf{x})$ is obtained as $\mu_{p} = \int \phi(\mathbf{x})p(\mathbf{x})d\mathbf{x} = \frac{1}{\bm{\alpha}^T\bm{\theta}}\bm{\Lambda}\bm{\alpha}$, where $\bm{\Lambda}$ is a diagonal matrix with $[\bm{\Lambda}]_{ii}=\lambda_i$. The norm of $\mu_{p}$ can be easily calculated as $\Vert \mu_{p} \Vert_{\mathcal{H}_{\mathcal{X}}}^2 = 1/(\bm{\alpha}^T\bm{\theta})^2 \bm{\alpha}^T\bm{\Lambda}^2\bm{\alpha}=1/(\bm{\alpha}^T\bm{\theta})^2 \sum_{i=1}^{N_{\mathcal{H}}} \lambda_i^2 \alpha_i^2$.

Now we construct another probability density $q(\mathbf{x}) = \frac{1}{\bm{\beta}^T\bm{\theta}}\phi(\mathbf{x})^T\bm{\beta}$. Without loss of generality, we assume that $\beta_i = \alpha_i,~\forall i\geq 2$. Similarly, we have $\Vert \mu_{q} \Vert_{\mathcal{H}}^2 = 1/(\bm{\beta}^T\bm{\theta})^2 \bm{\beta}^T\bm{\Lambda}^2\bm{\beta}=1/(\bm{\beta}^T\bm{\theta})^2 \sum_{i=1}^{N_{\mathcal{H}}} \lambda_i^2 \beta_i^2$. In order to make $\Vert \mu_{p} \Vert_{\mathcal{H}} = \Vert \mu_{q} \Vert_{\mathcal{H}}$, we construct $\beta_1$ and $\beta_2$ in the way that:
\begin{equation}\label{Eq:beta_equations}
\begin{split}
& \theta_1 \beta_1 + \theta_2 \beta_2 = \theta_1\alpha_1 + \theta_2\alpha_2 = C_1\\
&\lambda_1^2 \beta_1^2 + \lambda_2^2 \beta_2^2 = \lambda_1^2 \alpha_1^2 + \lambda_2^2 \alpha_2^2 = C_2^2,\\
\end{split}
\end{equation}
where $C_2>0$.
We attain $({\lambda_1}/{C_2})^2 \beta_1^2 + ({\lambda_2}/{C_2})^2 \beta_2^2 = 1$. Let $\beta_1 = C_2/\lambda_1 sin(\varphi)$ and $\beta_2 = C_2/\lambda_2 cos(\varphi)$, we obtain:
\begin{equation}\label{Eq:triagonal_equation}
{\theta_1C_2}/{\lambda_1} sin(\varphi) + {\theta_2C_2}/{\lambda_2}cos(\varphi) = C_1.
\end{equation}

In order to ensure a solution exists for Eq. \ref{Eq:triagonal_equation}, we need to prove that $\left\lvert {C_1}/{\sqrt{(\frac{\theta_1C_2}{\lambda_1})^2 + (\frac{\theta_2C_2}{\lambda_2})^2}} \right\rvert \leq 1$.
We show that
\[
\begin{split}
&{\theta_1^2C_2^2}/{\lambda_1^2} + {\theta_2^2C_2^2}/{\lambda_2^2} - C_1^2 = {\theta_1^2}/{\lambda_1^2}(\lambda_1^2\alpha_1^2 + \lambda_2^2\alpha_2^2) + {\theta_2^2}/{\lambda_2^2}(\lambda_1^2\alpha_1^2 + \lambda_2^2\alpha_2^2) - (\theta_1\alpha_1+\theta_2\alpha_2)^2\\
&={\lambda_1^2}/{\lambda_2^2}\theta_2^2\alpha_1^2 + {\lambda_2^2}/{\lambda_1^2}\theta_1^2\alpha_2^2 - 2\theta_1\theta_2\alpha_1\alpha_2 \geq 2\vert \theta_1\theta_2\alpha_1\alpha_2 \vert - 2\theta_1\theta_2\alpha_1\alpha_2 \geq 0.
\end{split}
\]
Consequently, we prove that Eq.\ref{Eq:triagonal_equation} holds and there exists two solutions, i.e. $\varphi = \mathop{arcsin}\left( {C_1}/{\sqrt{(\frac{\theta_1C_2}{\lambda_1})^2 + (\frac{\theta_2C_2}{\lambda_2})^2}}\right) - \omega $ and $\varphi = \pi  - \mathop{arcsin} \left({C_1}/{\sqrt{(\frac{\theta_1C_2}{\lambda_1})^2 + (\frac{\theta_2C_2}{\lambda_2})^2}} \right)- \omega$, where $\mathop{sin}(\omega)=\theta_2/(\lambda_2\sqrt{\theta_1^2/\lambda_1^2 + \theta_2^2/\lambda_2^2})$ and $\mathop{cos}(\omega)=\theta_1/(\lambda_1\sqrt{\theta_1^2/\lambda_1^2 + \theta_2^2/\lambda_2^2})$. Two solutions collapse to one if and only if $\lvert {C_1}/{\sqrt{(\frac{\theta_1C_2}{\lambda_1})^2 + (\frac{\theta_2C_2}{\lambda_2})^2}} \rvert=1$ which rarely happens as it requires mutual adjustment of the probability density function $p(\mathbf{x})$ and the kernel function.
\end{proof}

The intuitive interpretation of the proof can also be elucidated in Fig.~\ref{Fig:ellipse}. The solution $(\beta_1,\beta_2)$ of the first equation $\theta_1 \beta_1 + \theta_2 \beta_2=C_1$ forms a line and the solution of the second equation $\lambda_1^2 \beta_1^2 + \lambda_2^2 \beta_2^2=C_2^2$ forms an ellipse and thus the solution of Eq. \ref{Eq:beta_equations} is the intersection of the line and the ellipse. Note that the intersection should happen as $(\alpha_1,\alpha_2)$ is already a solution to Eq. \ref{Eq:beta_equations}. With high probability, there are two distinct intersection points as shown in Fig. \ref{Fig:ellipse} except for some rare cases that the points collapse to a single point when the line is the tangent line of the ellipse. This is rare because it requires mutual adjustment between $\bm{\alpha}$, $\bm{\theta}$  and $\bm{\lambda}$ which in turn essentially requires the mutual adjustment between $\phi(\mathbf{x})$ and $p(\mathbf{x})$. According to Lemma ~\ref{lemma:non_injective}, we see that the RKHS norm which is directly applied to the conditional distribution instantiated at different value is not discriminative enough. There are conditional distributions with significant distinction but they can have equal norms and thus it leads to some problems for the proposed KCDC algorithm in \cite{mitrovic2018causal}.

\subsection{Causal Discovery via  Kernel Intrinsic Invariance Measure}

Realizing the limitation of the norm based approach, we propose our method which measures the norm of the difference of the kernel mean embeddings corresponding to conditional distributions instantiated at different values, instead of measuring the difference of their norms. However, a naive application of this idea might not work because even in the causal direction, conditional distributions instantiated at different cause values are not NOT identical. They could be different with each other in terms of the location and the scale, although we are more interested in higher order moments that are more relevant to the shape of the density function. For example, in a toy example proposed in \cite{mitrovic2018causal}, we have $y = x^3 + x + \epsilon$, where $\epsilon \sim \mathcal{N}(0,1)$. Even in the causal direction, we get $p(y|x)=\exp(-0.5(y - x^3 -x)^2)$. Conditional distributions instantiated at different $x$ are all Gaussian distributions but they have different means and thus they are not identical, neither are their kernel mean embeddings and the corresponding norms (this can be easily verified if one use a polynomial kernel).
However, the location and scale information of a distribution are not that interesting to us when it comes to causal discovery as we are more keen on the higher order statistics that reflect shape information.

This observation motivates our method to capture more intrinsic information of the probability density function. Mathematically, we define the following score that capture the ``intrinsic" variation of the conditional distribution instantiated at different values of the conditioning variable $\mathbf{x}$ or $\mathbf{y}$ for two hypothetic directions. Without loss of generality, we show definition of the score in the direction of $\mathbf{x}\rightarrow \mathbf{y}$:

\begin{equation}\label{Eq:our_score}
\mathcal{S}_{\mathbf{x}\rightarrow \mathbf{y} } := \min_{\mathbf{W}}\sum_{i=1}^{n} \Vert \mathbf{W}^T\mu_{Y|\mathbf{x}_i} - \frac{1}{n}\sum_{j=1}^n \mathbf{W}^T\mu_{Y|\mathbf{x}_j}\Vert_{\mathcal{H}_{\mathcal{Y}}}^2.
\end{equation}
The interpretation of Eq.\ref{Eq:our_score} is that we calculate the norm of the difference of conditional distributions at different values. The score is zero if and only if all conditional distributions are the same according the injectiveness of the kernel mean embedding. The matrix $\mathbf{W}$ is introduced to find the subspace which removes the effect of some trivial deviance like location and scale. Empirically, we can calculate the kernel embedding of the conditional distribution instantiated at different $\mathbf{x}$ as $\mu_{Y|\mathbf{x}_i} = \mathbf{\Psi}(\mathbf{K}_{x}+\lambda \mathbf{I})^{-1} \mathbf{k}_{\mathbf{x}_i}$,
where $\bm{\Psi}=[\psi(\mathbf{y}_1),\cdots, \psi(\mathbf{y}_n)]$, $\mathbf{K}_{\mathbf{x}}$ is the kernel gram matrix of $\mathbf{x}$ and $\mathbf{k}_{\mathbf{x}_i}^T=k(\mathbf{X},\mathbf{x}_i)$. Note that the solution of the above optimization problem lies in the span of $\bm{\Psi}$, and thus we can represent $\mathbf{W}$ by a linear combination of $\psi(\mathbf{y}_i)$, i.e. $\mathbf{W} = \bm{\Psi}\tilde{\mathbf{W}}$, where $\tilde{\mathbf{W}}\in R^{n\times p }$ is the coefficient matrix.  Consequently, we attain
\begin{equation}
\begin{split}
\mathcal{S}_{\mathbf{x}\rightarrow \mathbf{y} } &:= \min_{\tilde{\mathbf{W}}}\frac{1}{n}\sum_{i=1}^{n} \Vert \tilde{\mathbf{W}}^T\mathbf{K}_y(\mathbf{K}_x + \lambda \mathbf{I})^{-1}\mathbf{k}_{\mathbf{x}_i} - \frac{1}{n}\sum_{j=1}^n \tilde{\mathbf{W}}^T\mathbf{K}_y(\mathbf{K}_x + \lambda \mathbf{I})^{-1}\mathbf{k}_{\mathbf{x}_j}\Vert_{\mathcal{H}_{\mathcal{Y}}}^2\\
%& = \min_{\mathbf{W}} \frac{1}{n}\sum_{i=1}^n \Vert \tilde{\mathbf{W}}^T\mathbf{K}_{y}(\mathbf{K}_x + \lambda \mathbf{I})^{-1} (\mathbf{k}_{\mathbf{x}_i} - \frac{1}{n} \sum_{j=1}^n \mathbf{k}_{\mathbf{x}_j}) \Vert_{\mathcal{H}}^2\\
%& =  \min_{\mathbf{W}} \sum_{i=1}^n \frac{1}{n}\mathop{Tr} \left( \tilde{\mathbf{W}}^T\mathbf{K}_{y}(\mathbf{K}_x + \lambda \mathbf{I})^{-1} (\mathbf{k}_{\mathbf{x}_i} - \frac{1}{n} \sum_{j=1}^n \mathbf{k}_{\mathbf{x}_j})(\mathbf{k}_{\mathbf{x}_i} - \frac{1}{n} \sum_{j=1}^n \mathbf{k}_{\mathbf{x}_j})^T(\mathbf{K}_x + \lambda \mathbf{I})^{-1}  \mathbf{K}_{y} \tilde{\mathbf{W}}\right) \\
%& = \min_{\mathbf{W}}  \frac{1}{n}\mathop{Tr} \left( \tilde{\mathbf{W}}^T\mathbf{K}_{y}(\mathbf{K}_x + \lambda \mathbf{I})^{-1} ( \sum_{i=1}^n(\mathbf{k}_{\mathbf{x}_i} - \frac{1}{n} \sum_{j=1}^n \mathbf{k}_{\mathbf{x}_j})(\mathbf{k}_{\mathbf{x}_i} - \frac{1}{n} \sum_{j=1}^n \mathbf{k}_{\mathbf{x}_j})^T)(\mathbf{K}_x + \lambda \mathbf{I})^{-1}  \mathbf{K}_{y} \tilde{\mathbf{W}}\right) \\
&= \min_{\tilde{\mathbf{W}}} \frac{1}{n} \mathop{Tr} \left( \tilde{\mathbf{W}}^T\mathbf{K}_{y}(\mathbf{K}_x + \lambda \mathbf{I})^{-1} \mathbf{K}_x\mathbf{H}\mathbf{K}_x(\mathbf{K}_x + \lambda \mathbf{I})^{-1}  \mathbf{K}_{y} \tilde{\mathbf{W}}\right)
\end{split}
\end{equation}

To avoid trivial solution $\tilde{\mathbf{W}}=\mathbf{0}$, we pose the constraint that $\tilde{\mathbf{W}}^T\tilde{\mathbf{W}}=\mathbf{I}$. Consequently, we have

\begin{equation}
\begin{split}
\mathcal{S}_{\mathbf{x}\rightarrow \mathbf{y}}&= \min_{\tilde{\mathbf{W}}} \frac{1}{n} \mathop{Tr} \left( \tilde{\mathbf{W}}^T\mathbf{K}_{y}(\mathbf{K}_x + \lambda \mathbf{I})^{-1} \mathbf{K}_x\mathbf{H}\mathbf{K}_x(\mathbf{K}_x + \lambda \mathbf{I})^{-1}  \mathbf{K}_{y} \tilde{\mathbf{W}}\right) ~ \mbox{s.t.} \tilde{\mathbf{W}}^T\tilde{\mathbf{W}} = \mathbf{I}.\\
\end{split}
\end{equation}

The intuitive interpretation of the proposed method is that we use the projection matrix $\mathbf{W}$ to find intrinsic deviance of the conditional mean embedding. The intrinsic deviance captures higher order statistics which is more relevant to the shape of the probability distribution function while discards the some less important information, e.g. the location and scale of a distribution.
As an illustrative example, suppose we use a polynomial kernel $k({x},{x}')=(xx'+1)^{d}$, it can be easily shown that we essentially map $x$ to a space with polynomials of the feature, i.e. $x\mapsto [1, x, x^2, \cdots, x^d]$. Therefore, the kernel mean embedding of a distribution is in fact a vector of moments up to degree $d$, i.e.
\[
\mu_X := \int \phi(x)p(x)dx = [1, m_1, m_2, \cdots, m_d]^T,
\]
where $m_i$ denotes the $i$-th order moment. If the conditional distributions only differ from each other with mean and standard deviation (the first and second order moments), then the projection matrix $\mathbf{W}$ is expected to find the subspace that contains only higher order moments.

However, how to decide the rank of $\mathbf{W}$ is an open question. In this paper, we propose a simple but effective algorithm to choose the right rank, see Alg. \ref{Alg:Framwork}. The basic idea is to project of the subspace corresponding to the smallest $k$ eigenvalues which preserve at least $90\%$ of the energy of the whole spectrum.

\begin{algorithm}[htb]
\caption{ Framework of the Kernel Intrinsic Invariance Measure (KIIM)}
\label{Alg:Framwork}
\begin{algorithmic}[1] %这个1 表示每一行都显示数字
\REQUIRE ~~\\ %算法的输入参数：Input
The set of samples $X=[\mathbf{x}_1,\mathbf{x}_2,\cdots,\mathbf{x}_n]$, $Y=[\mathbf{y}_1,\mathbf{y}_2,\cdots,\mathbf{y}_n]$;\\
The regularization hyperparameter $\lambda$;\\
\ENSURE ~~\\ %算法的输出：Output
The inferred causal direction, $\mathcal{S}_{\mathbf{x}\rightarrow \mathbf{y}}$, and $\mathcal{S}_{\mathbf{y}\rightarrow \mathbf{x}}$.
\STATE Compute Kernel Gram matrix $\mathbf{K}_y$, $\mathbf{K}_x$ and the centering matrix $\mathbf{H}$;
%\label{ code:fram:extract }%对此行的标记，方便在文中引用算法的某个步骤
\STATE Compute the Kernel Intrinsic Invariance Matrix $\mathbf{M}^{(1)}=\mathbf{K}_{y}(\mathbf{K}_x + \lambda \mathbf{I})^{-1} \mathbf{K}_x\mathbf{H}\mathbf{K}_x(\mathbf{K}_x + \lambda \mathbf{I})^{-1}  \mathbf{K}_{y}$ for the hypothetic direction $\mathbf{x}\rightarrow \mathbf{y}$;

\STATE Conduct eigen-decomposition of $\mathbf{M}^{(1)}=\mathbf{U}^{(1)}\bm{\Pi}^{(1)}(\mathbf{U}^{(1)})^T$ and calculate $\mathcal{S}_{\mathbf{x}\rightarrow \mathbf{y}}=\sum_{i=k_1}^n \pi^{(1)}_i$, where $\pi^{(1)}_i,~\forall i\geq k_1$ are the bottom-$k$ smallest eigenvalues such that $\sum_{i=k_1}^n \pi^{(1)}_i / \sum_{i=1}^n \pi^{(1)}_i\geq 0.9$ and $k_1$ is the largest number when the inequality holds.

\STATE Repeat Step 3 for the other direction.

%\STATE Conduct eigen-decomposition of $\mathbf{M}^{(2)}=\mathbf{U}^{(2)}\bm{\Pi}^{(2)}(\mathbf{U}^{(2)})^T$ and $\mathcal{S}_{\mathbf{y}\rightarrow \mathbf{x}}=\sum_{i=k_2}^n \pi^{(2)}_i$, where $\pi^{(2)}_i,~\forall i\geq k_2$ are the bottom-$k$ smallest eigenvalues such that $\sum_{i=k_2}^n \pi^{(2)}_i / \sum_{i=1}^n \pi^{(2)}_i\geq 0.9$ and $k_2$ is the largest number when the inequality holds.
\STATE The causal direction is inferred as $\mathbf{x}\rightarrow \mathbf{y}$ if $\mathcal{S}_{\mathbf{x}\rightarrow}\mathbf{y}<\mathcal{S}_{\mathbf{y}\rightarrow}\mathbf{x}$ or $\mathbf{y}\rightarrow \mathbf{x}$ if $\mathcal{S}_{\mathbf{x}\rightarrow}\mathbf{y}>\mathcal{S}_{\mathbf{y}\rightarrow}\mathbf{x}$. No conclusion will be made if $\mathcal{S}_{\mathbf{x}\rightarrow}\mathbf{y}=\mathcal{S}_{\mathbf{y}\rightarrow}\mathbf{x}$.
\RETURN The causal direction, $\mathcal{S}_{\mathbf{x}\rightarrow \mathbf{y}}$, $\mathcal{S}_{\mathbf{y}\rightarrow \mathbf{x}}$ ; %算法的返回值
\end{algorithmic}
\end{algorithm}

\subsection{Robust Kernel Intrinsic Invariance Measure by Importance Reweighting}

Real world data is usually contaminated with noise and outliers. The estimation of the kernel mean embedding might be significantly biased due to the outliers in data. Furthermore, when estimating the conditional mean embedding, we want to eliminate any effect of the marginal distribution of the hypothetic cause due to finite sample size. We adopt an importance reweighting scheme as follows:
\begin{equation}
%\begin{split}
\mathcal{C}_{YX}^{ref} := \int \phi(\mathbf{y})\otimes \phi(\mathbf{x}) p(\mathbf{y}|\mathbf{x})u(\mathbf{x})d\mathbf{x}, ~~\mathcal{C}_{XX}^{ref} := \int \phi(\mathbf{x})\otimes \phi(\mathbf{x})u(\mathbf{x})d\mathbf{x},\\
%\end{split}
\end{equation}
and $\mu_{Y|\mathbf{x}}^{ref} = \mathcal{C}_{YX}^{ref}\left(\mathcal{C}_{XX}^{ref}\right)^{-1}\phi(\mathbf{x})$, where $u(\mathbf{x})$ is a reference distribution. The empirical estimation is then obtained by
\begin{equation}
\hat{\mu}_{Y|\mathbf{x}}^{ref} = \bm{\Psi}\mathbf{H}\mathbf{R}^{1/2}(\mathbf{H}\mathbf{R}^{1/2}\mathbf{K}_x\mathbf{R}^{1/2}\mathbf{H} + \lambda n \mathbf{I})^{-1}\mathbf{R}^{1/2}\mathbf{H}\mathbf{k}_{\mathbf{x}},
\end{equation}
where $\mathbf{R}$ is a diagonal reweighting matrix with $[\mathbf{R}]_{ii} = u(\mathbf{x}_i)/p(\mathbf{x}_i)$. The main body of the algorithm does not change except for the calculation of $\mathbf{M}^{(1)}$ and $\mathbf{M}^{(2)}$ in Alg. \ref{Alg:Framwork}. We name this variant of our algorithm Rw-KIIM meaning Reweighted Kernel Intrinsic Invariance Measure.

\section{Experiment}\label{Sec:experiment}
In this section, we conduct experiments using both synthetic data and a real world dataset called Tuebigen Cause Effect Pairs (TCEP) . We compare our methods with some state-of-the-art methods including KCDC\footnote{Although positive results were reported in \cite{mitrovic2018causal}, unfortunately we are not able to reproduce the results reported in the paper.}, IGCI \cite{janzing2012information}, ANM \cite{hoyer2009nonlinear} and LiNGAM\cite{shimizu2006linear} \footnote{https://github.com/Diviyan-Kalainathan/CausalDiscoveryToolbox}. For IGCI, we use the entropy based methods with two different reference distribution (Gaussian and Uniform distribution). We use $1e-3$ for the regularization hyperparameter when calculating the conditional mean embedding in Eq.\ref{Eq:empirical_estimation}. In the following experiment, we use the composite kernel for KIIM which is the multiplication of the RBF kernel $k(\mathbf{x},\mathbf{x}')=\exp(-\Vert \mathbf{x} - \mathbf{x}'\Vert^2/\sigma^2)$ with median heuristic for kernel width and a log kernel $k(\mathbf{x},\mathbf{x}')=-\log (\Vert \mathbf{x} - \mathbf{x}'\Vert^2+1)$ and a rational quadratic kernel   $k(\mathbf{x},\mathbf{x}')= 1 - \frac{\Vert \mathbf{x} - \mathbf{x}'\Vert^2}{\Vert \mathbf{x} - \mathbf{x}'\Vert^2+1}$. For KCDC, we use log kernel for the input and rational quadratic kernel for the output as in \cite{mitrovic2018causal}.
\begin{table}[h]
\vspace{-3mm}
\caption{Performance of synthetic dataset}
\resizebox{\textwidth}{!}{%
\begin{tabular}{ccccccc}
\hline
\hline
ANM-1& KCDC & KIIM & Rw-KIIM &IGCI(entropy,Gaussian) &IGCI(entropy, Uniform) &ANM\\
\hline
Gaussian & $93.5\%\pm 2.5\%$ &$100.0\%\pm 0.0\%$ &$100.0\%\pm 0.0\%$ &$98.1\%\pm 1.4\%$ &$100.0\%\pm 0.0\%$ & $100.0\%\pm 0.0\%$\\
Uniform & $61.1\%\pm 3.9\%$ &$100.0\%\pm 0.0\%$ &$100.0\%\pm 0.0\%$ &$99.4\%\pm 0.97\%$ &$100.0\%\pm 0.0\%$ & $100.0\%\pm 0.0\%$\\
\hline
ANM-2& KCDC & KIIM & Rw-KIIM &IGCI(entropy,Gaussian) &IGCI(entropy, Uniform) &ANM\\
\hline
squared-Gaussian & $59.8\%\pm 4.3\%$ &$89.6\%\pm 2.4\%$ &$89.8\%\pm 2.4\%$ &$74.8\%\pm 4.6\%$ &$95.3\%\pm 2.1\%$ & $70.1\%\pm 5.3\%$\\
Uniform & $57.3\%\pm 3.1\%$ &$56.8\%\pm 4.5\%$ &$57.0\%\pm 4.2\%$ &$49.4\%\pm 5.6\%$ &$49.4\%\pm 6.1\%$ & $67.6\%\pm 3.2\%$\\
\hline
\hline
MNM-1& KCDC & KIIM & Rw-KIIM &IGCI(entropy,Gaussian) &IGCI(entropy, Uniform) &ANM\\
\hline
Gaussian & $23.5\%\pm 3.1\%$ &$100.0\%\pm 0.0\%$ &$100.0\%\pm 0.0\%$ &$98.1\%\pm 1.4\%$ &$100.0\%\pm 0.0\%$ & $0.2\%\pm 0.4\%$\\
Uniform & $24.6\%\pm 4.1\%$ &$100.0\%\pm 0.0\%$ &$100.0\%\pm 0.0\%$ &$99.9\%\pm 0.3\%$ &$100.0\%\pm 0.0\%$ & $0.0\%\pm 0.0\%$\\
\hline
MNM-2& KCDC & KIIM & Rw-KIIM &IGCI(entropy,Gaussian) &IGCI(entropy, Uniform) &ANM\\
\hline
Gaussian & $60.2\%\pm 6.6\%$ &$100.0\%\pm 0.0\%$ &$100.0\%\pm 0.0\%$ &$100.0\%\pm 0.0\%$ &$100.0\%\pm 0.0\%$ & $1.0\%\pm 0.7\%$\\
Uniform & $97.9\%\pm 1.1\%$ &$100.0\%\pm 0.0\%$ &$100.0\%\pm 0.0\%$ &$100.0\%\pm 0.0\%$ &$100.0\%\pm 0.0\%$ & $28.4\%\pm 5.7\%$\\
\hline
\hline
Complex& KCDC & KIIM & Rw-KIIM &IGCI(entropy,Gaussian) &IGCI(entropy, Uniform) &ANM\\
\hline
Gaussian & $27.6\%\pm 5.8\%$ &$99.8\%\pm 0.4\%$ &$99.8\%\pm 0.4\%$ &$100.0\%\pm 0.0\%$ &$100.0\%\pm 0.0\%$ & $6.9\%\pm 1.9\%$\\
Uniform & $5.1\%\pm 1.4\%$ &$91.4\%\pm 2.0\%$ &$91.7\%\pm 2.0\%$ &$100.0\%\pm 0.0\%$ &$100.0\%\pm 0.0\%$ & $15.1\%\pm 3.8\%$\\
\hline
\end{tabular}
}
\label{Tab:synthetic}
\end{table}

\subsection{Synthetic Data}
In this section, we evaluate pairwise causal discover algorithms on data generated by 5 different data generation mechanisms following \cite{mitrovic2018causal}. Details of these mechanisms are given as follows. ANM-1: $y=x^3 + x + \epsilon$; ANM-2: $y=x + \epsilon$. MNM-1: $y = (x^3 + x)\exp(\epsilon)$; MNM-2: $y=(\sin(10x)+\exp(3x))\exp(\epsilon)$ and CNM: $y=(\log(x+10)+x^2)^{\epsilon}$ and the noise distribution is specified in Tab.\ref{Tab:synthetic}. We generate 100 samples from each data generation mechanism for different algorithm to infer the causal direction. Experiments are conducted for 100 independent trials and results of different algorithms are reported in Tab.\ref{Tab:synthetic}. We observe that IGCI is quite robust and $ANM$ performs well when the data generation mechanism is indeed additive noise model. Unfortunately, we are not able to reproduce positive results reported in \cite{mitrovic2018causal}. In this paper, we are using a direct version of KCDC without majority vote and the confident measure \cite{mitrovic2018causal} because these extra processes are not used in our algorithm IKKM. Even without this extra process, IKKM and Rw-KIIM perform quite well except for the linear ANM with uniform noise.

In order to justify the necessity of using a projection matrix $\mathbf{W}$ to a lower dimensional space, we compare the performance of IKKM with different ranks of $\mathbf{W}$. This results in using the algorithm exploiting eigenvalues ranging from the whole spectrum to only the smallest one. Two mechanisms are used in this experiments as shown in Fig. \ref{Fig:robustness_of_rank}. Interestingly, we find that the algorithm using the whole spectrum does not perform the best but the one discarding the top-1 eigenvalue performs consistently the best. This result justifies our motivation and argument: we need intrinsic deviance/invariance measurement that captures only higher order statistics of the shape of the conditional distribution. Trivial difference arising from the location and the scale might not be beneficial or even harmful for causal discovery.

%\begin{wrapfigure}{r}{0.4\textwidth}
%%\vspace{-3mm}
%  \centering
%  \includegraphics[width=0.4\textwidth]{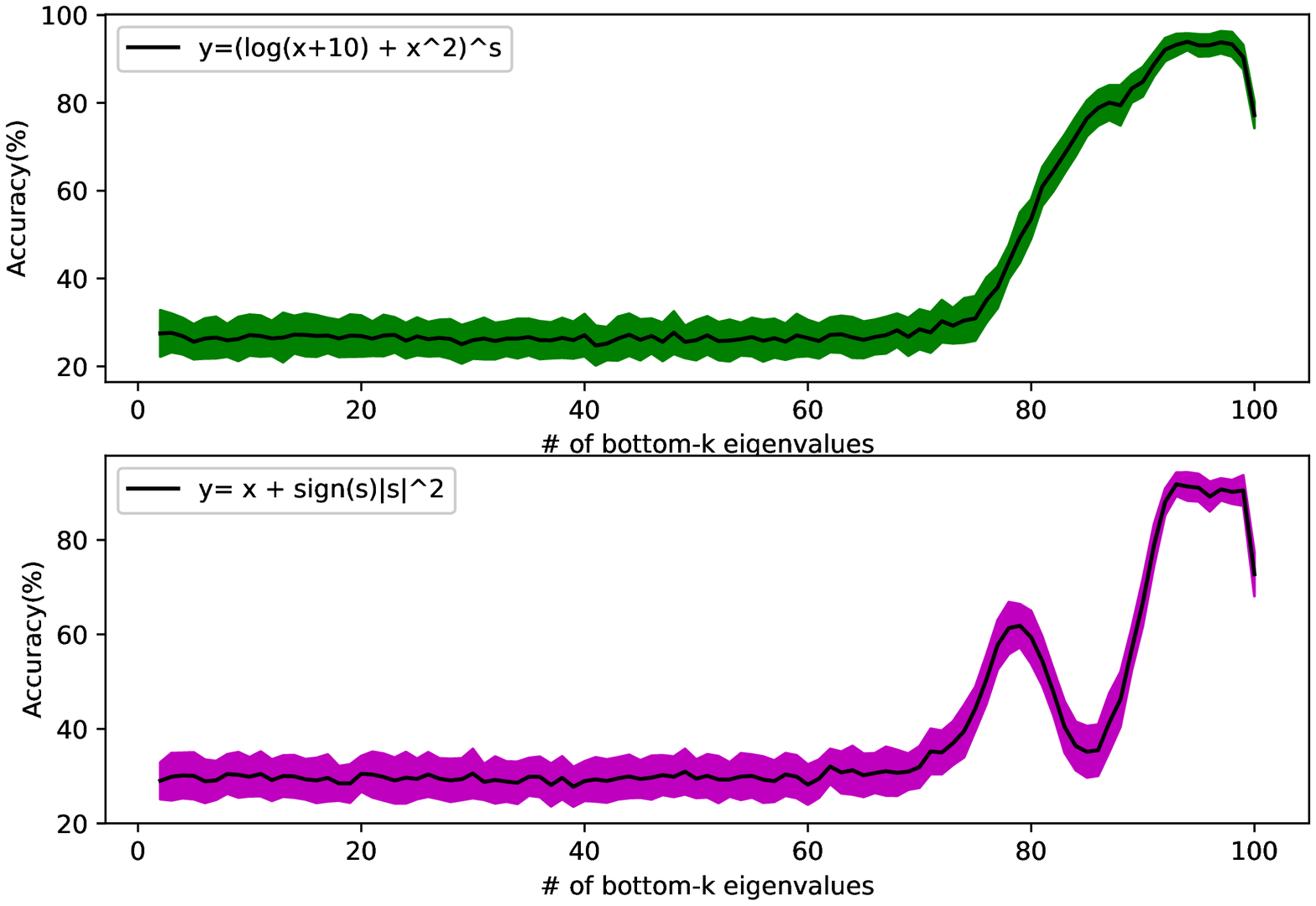}
%  \caption{Why intrinsic deviation is needed?}\label{Fig:robustness_of_rank}
%%\vspace{-3mm}
%\end{wrapfigure}

\subsection{Tuebinen Cause-Effect Pairs (TCEP)}

In this section, we verify the performance of our algorithm on real world data. We use the open benchmark called Tuebigen Cause-Effect Pairs\footnote{https://webdav.tuebingen.mpg.de/cause-effect/} which has been widely used to evaluate causal discovery algorithms. The whole dataset contains 108 cause-effect pairs  taken from 37 different data sets from various domains \cite{mooij2016distinguishing} with known ground truth. We do not use some pairs as both $x$ and $y$ are high-dimensional variables in pair 52,53,54,55,71,105 and there are missing values in pairs 81, 82 and 83. The ground truth direction in pair 86 is not mentioned in the data description document and thus is not used in our experiments. Fig.\ref{Fig:tcep_accuracy} shows our proposed algorithm outperform the state-of-the-art methods significantly.

%\begin{wrapfigure}{r}{0.5\textwidth}
%%\begin{figure}[h]\label{Fig:TCEP}
%  \centering
%  \includegraphics[width=0.5\textwidth]{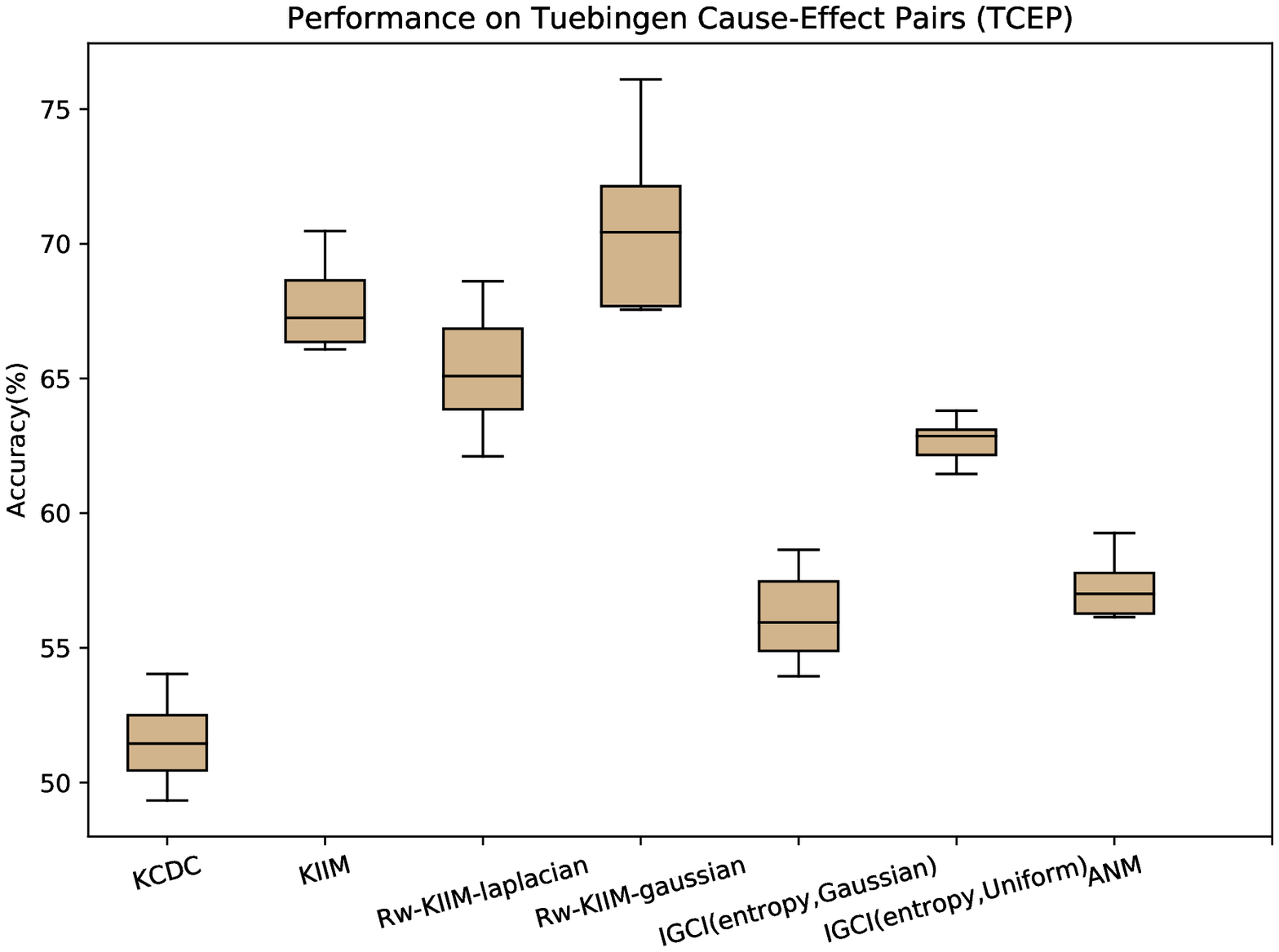}
%  \caption{Accuracy of the TCEP dataset}\label{Fig:tcep_accuracy}
%\end{wrapfigure}

\begin{figure}
\centering
\begin{subfigure}{.5\textwidth}
  \centering
  \includegraphics[width=\linewidth]{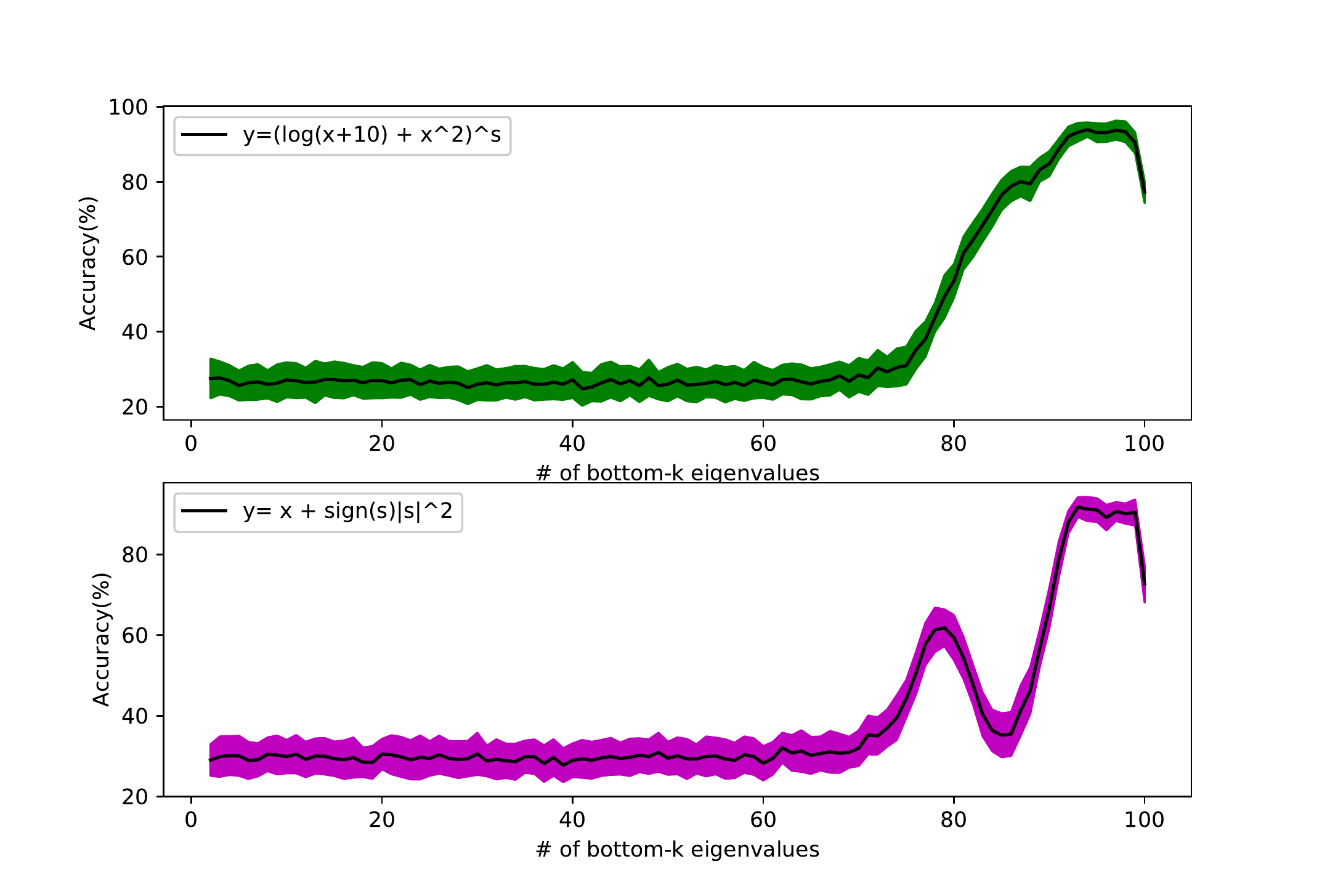}
\caption{Why intrinsic deviance is needed?}\label{Fig:robustness_of_rank}
\end{subfigure}%
\begin{subfigure}{.5\textwidth}
  \centering
  \includegraphics[width=\linewidth]{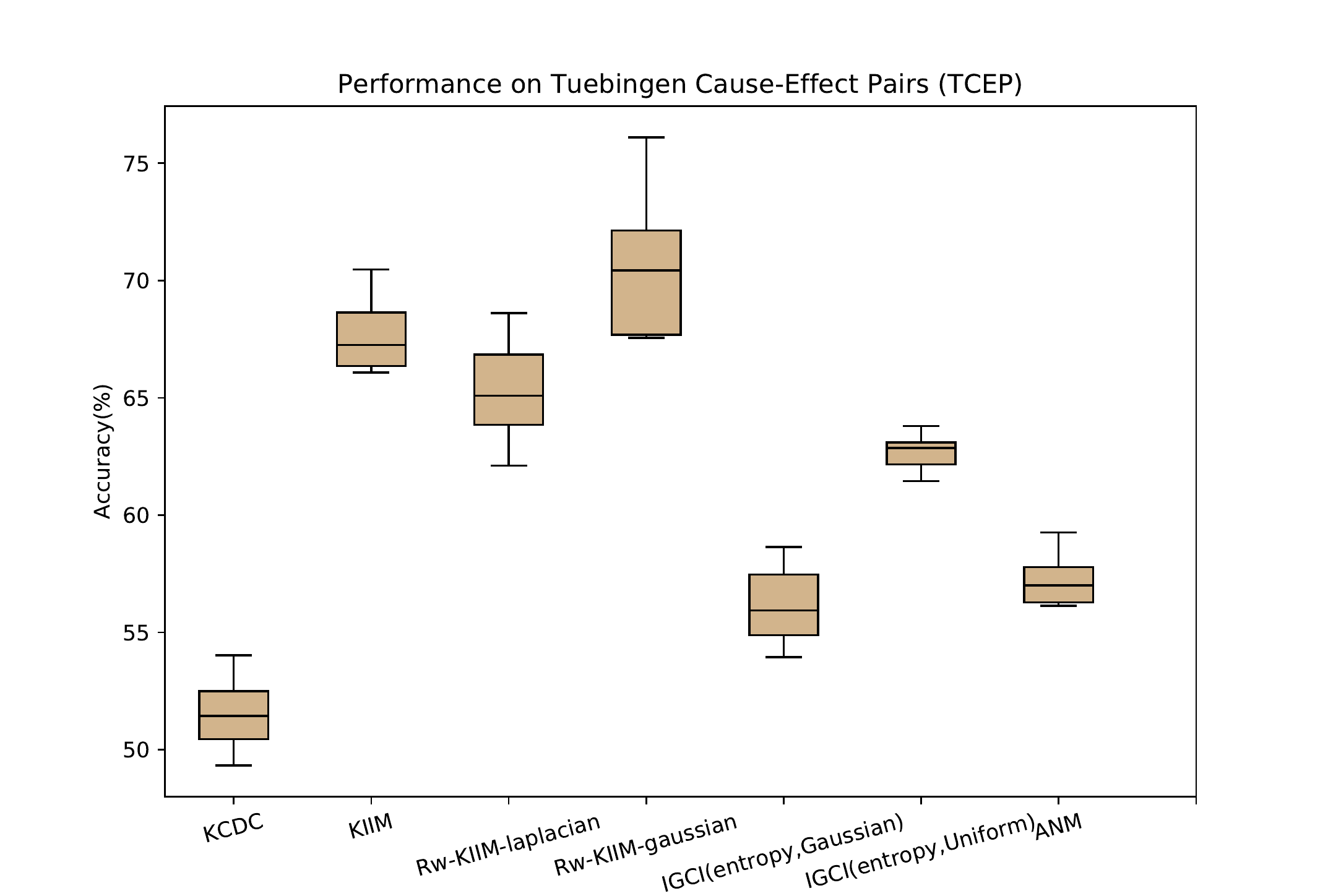}
\caption{Accuracy of the TCEP dataset}\label{Fig:tcep_accuracy}
\end{subfigure}

\end{figure}

\section{Conclusion}\label{Sec:conclusion}
In this paper, we focus on causal discovery for cause-effect pairs along the direction of Independent Mechanism (IM) principle. We
prove that the existing norm-based state-of-the-art which only compare the norms of conditional mean embedding might lose discriminative power. To solve this problem, we propose a Kernel Intrinsic Invariance Measure (KIIM) to capture the intrinsic invariance of the conditional distribution, i.e. the higher order statistics corresponding to the shapes of the density functions. Experiments with synthetic data and real data justify the effectiveness of our proposed algorithm and supports our argument that we indeed need to look for higher order statistics/intrinsic invariance for causal discovery.

\let\oldthebibliography=\thebibliography
\let\endoldthebibliography=\endthebibliography
\renewenvironment{thebibliography}[1]{%
   \begin{oldthebibliography}{#1}%
     \setlength{\itemsep}{0.3ex}%
}%
{%
   \end{oldthebibliography}%
}

\bibliographystyle{abbrv}
\bibliography{gibbsOpt}

\end{document}